\newtheorem{definition}{Definition}
\newtheorem{example}{Example}
\newtheorem{proposition}{Proposition}
\newtheorem{theorem}{Theorem}
\newcommand{\jm}[1]{\todo[color=green,inline]{JM: #1}}
\newcommand{\rank}[1]{\text{rank}(#1)}
\newcommand{\Rank}{\text{rank}}
\title{An Extension-Based Argument-Ranking Semantics:\\
Social Rankings in Abstract Argumentation\\ Long Version}
\author {
    Lars Bengel\textsuperscript{\rm 1},
    Giovanni Buraglio\textsuperscript{\rm 2},
    Jan Maly\textsuperscript{\rm 3,2},
    Kenneth Skiba\textsuperscript{\rm 1}
}
\begin{document}

\maketitle

\begin{abstract}

In this paper, we introduce a new family of argument-ranking semantics which can be seen as a refinement of the classification of arguments into skeptically accepted, credulously accepted and rejected. To this end we use so-called social ranking functions which have been developed recently to rank individuals based on their performance in groups. We provide necessary and sufficient conditions for a social ranking function to give rise to an argument-ranking semantics satisfying the desired refinement property. 
\end{abstract}

%

\section{Introduction}

One of the core problems of computational models of argumentation is to classify the quality of 
arguments in the context of a larger discussion. In abstract argumentation, 
this is usually achieved by checking whether an argument is contained in
a set of jointly acceptable arguments, called \emph{extensions}, according to one of several well-established semantics.
While these semantics provide a natural way to rank arguments based on
the larger context of the debate at hand, 
they only allow us to distinguish three types of arguments:
the ones that are \emph{skeptically accepted}, i.\,e. that are contained in every extension;
the ones that are \emph{credulously accepted}, i.\,e. that are contained in at least 
one extension; and the ones that are \emph{rejected}, i.\,e. that are not contained in any extension. 
For this reason, more fine-grained ways of comparing arguments have been
proposed, namely the so called \emph{argument-ranking semantics} \cite{DBLP:journals/jair/CayrolL05,DBLP:conf/sum/AmgoudB13a,AmgoudBDV16,DBLP:conf/aaai/BonzonDKM16,HeyninckRS23}. 
However, generally, such argument-ranking semantics are technically quite
distinct from the extension-based classifications of arguments that 
are more commonly used.

In this paper, we propose a new way of ranking arguments which can be seen 
as a true refinement of the classification in
skeptically, credulously and not accepted arguments. To this end, 
we combine two strands of literature that have emerged recently,
namely \emph{extension-ranking semantics} and \emph{social ranking functions}, in 
a novel way. Intuitively, social ranking functions allow us 
to rank elements based on the quality of sets they are contained in. 
These functions were first introduced in the economics literature 
\cite{DBLP:conf/aldt/MorettiO17}, in order to judge the performance of individuals
based on the success of groups that they were involved in,
and has received significant attention from economists and computer scientists 
\cite{DBLP:conf/ijcai/KhaniMO19,DBLP:conf/ijcai/HaretKMO18,Bernardi19,Suzuki24}.
Unfortunately, semantics that 
only distinguish between sets of arguments that are jointly acceptable and the ones that 
are not do not provide enough information to construct a fine-grained ranking of arguments by applying a social ranking function.
Closer to our needs, \citet{DBLP:conf/ijcai/SkibaRTHK21} recently introduced 
so-called extension-ranking semantics that refine and extend classical argumentation
semantics by providing a partial ranking over sets of arguments.
We employ social ranking functions to this ranking to compare single arguments based on how often an argument can be found in a better extension.
Thus, an argument is preferred to another in the resulting argument-ranking if it contributes to making a larger number of sets acceptable (to a higher degree). Indeed, in this setting social rankings captures a notion of contribution relative to a specific semantics. 

Unfortunately, as mentioned, extension-ranking semantics only provide
a partial ordering, while social ranking 
functions generally take total orders as input. 
We therefore first generalize the theory of social ranking functions
to allow for partial orders, using the so-called
rank of a set.

We then show that, by applying
the right social ranking functions to an extension-ranking 
semantics, we can define argument-ranking semantics that are 
a refinement of the traditional skeptical/credulous acceptance 
of arguments, both in spirit and in a strict technical sense.
More precisely, we show that by applying the \emph{lexicographic 
excellence operator} introduced by \citet{Bernardi19}
to the extension-ranking semantics of
\citet{DBLP:conf/ijcai/SkibaRTHK21} we generate an 
argument ranking such that all skeptically accepted 
arguments are ranked before all credulously accepted 
arguments, which are, in turn, ranked before all rejected 
arguments. More generally, we show which axiomatic properties
are sufficient and necessary for a social ranking operator
to give rise to such a ranking (Section \ref{sec:arg ranking social ranking}). Additionally, we show that
the argument-ranking semantics induced by the lexicographic excellence operator
satisfies these properties and is thus an example of an argument-ranking semantics that satisfies our refinement property.
We conclude by discussing related work (Section \ref{sec:related work}) and then summarizing our results 
and highlighting directions for future research (Section \ref{sec:conclusion}).
Omitted proofs can be found in the appendix.

\section{Preliminaries}\label{sec:preliminaries}

In this section, we introduce the basics of
abstract argumentation literature for our work. 
We will start with the standard model of abstract argumentation,
before introducing argument-ranking and extension-ranking semantics.

\paragraph{Abstract Argumentation Frameworks}
An \emph{abstract argumentation framework} ($AF$) is a directed graph $F=(A,R)$ where $A$ is a (finite) set of \emph{arguments} and $R\subseteq A \times A$ is an \emph{attack relation} among them~\cite{Dung:1995}.
An argument $a$ is said to \emph{attack} an argument $b$ if $(a,b) \in R $. We say that an argument $a$ is \emph{defended by a set} $E \subseteq A$ if every argument $b \in A$ that attacks $a$ is attacked by some $c \in E$. For $a\in A$ we define $a^{-}_{F}=\{b\mid (b,a) \in R \}$ and $a^{+}_{F}=\{b\mid (a, b) \in R\}$ as the sets of arguments attacking $a$ and the sets of arguments that are attacked by $a$ in $F$. For a set of arguments $E \subseteq A$ we extend these definitions to $E^{-}_{F}$ and $E^{+}_{F}$ via $E^{-}_{F} = \bigcup_{a \in E} a^{-}_F$ and $E^{+}_{F} = \bigcup_{a \in E} a^{+}_F$, respectively. If the AF is clear in the context, we will omit the index. 

Most semantics \cite{HOFASemantics} for abstract argumentation are relying on two basic concepts: \emph{conflict-freeness} and \emph{admissibility}.
\begin{definition}
Given $F = (A, R)$, a set $E \subseteq A$ is:
    \emph{conflict-free} iff $\forall a,b \in E$, $(a,b) \not \in R$;
    \emph{admissible} iff it is conflict-free, and every element of $E$ is defended by $E$.
\end{definition}
For an AF $F$ we use $cf(F)$ and $ad(F)$ to denote the sets of conflict-free and admissible sets, respectively.
In order to define the remaining semantics proposed by \citet{Dung:1995} as well as semi-stable semantics~\cite{DBLP:journals/logcom/CaminadaCD12} we make use of the \emph{characteristic function}. 
\begin{definition}
    For an AF $F= (A,R)$ and a set of arguments $E \subseteq A$  the \emph{characteristic function} $\mathcal{F}_{F}(E): 2^{A} \rightarrow 2^{A}$ is defined via:
    $$\mathcal{F}_{F}(E)= \{a \in A | E \text{ defends } a\}$$
    An admissible set $E \subseteq A$ is
 	a \emph{complete} extension ($co$) iff $E = \mathcal{F}_{F}(E)$;
 	a \emph{preferred} extension ($pr$) iff it is a $\subseteq$-maximal complete extension;
 	the unique \emph{grounded} extension ($gr$) iff $E$ is the least fixed point of $\mathcal{F}_{F}$;
 	a \emph{stable} extension ($stb$) iff $E^+_F = A \setminus E$; 
  a \emph{semi-stable} extension ($sst$) iff it is a complete extension, where $E \cup E^+_F$ is $\subseteq$-maximal.
\end{definition}


The sets of extensions of an AF $F$ for these five semantics are denoted as $co(F)$, $pr(F)$, $gr(F)$, $stb(F)$ and $sst(F)$ respectively. Based on these semantics, we can define the status of any argument, namely {\emph{skeptically accepted}} (belonging to each $\sigma$-extension), {\emph{credulously accepted} (belonging to some $\sigma$-extension) and {\emph{rejected} (belonging to no $\sigma$-extension). 
Given an AF $F$ and an extension-based semantics $\sigma$, we use (respectively) $sk_\sigma(F)$, $cred_\sigma(F)$ and $rej_\sigma(F)$ to denote these sets of arguments.

\begin{example}\label{example:af_example}
Consider the AF $F = (A, R)$ depicted as a directed graph in Figure~\ref{fig:example1}, with the nodes corresponding to arguments $A= \{a,b,c,d\}$, and the edges corresponding to attacks $R = \{(a,b), (b,c), (c,d), (d,c)\}$.
\begin{figure}[t] 
 \begin{center}
     \begin{tikzpicture}
        \node (a) [circle, draw, minimum size= 0.65cm] at (0, 0) {$a$};
        \node (b) [circle, draw, minimum size= 0.65cm] at (1, 0) {$b$};
        \node (c) [circle, draw, minimum size= 0.65cm] at (2, 0.0) {$c$};
        \node (d) [circle, draw, minimum size= 0.65cm] at (3, 0.0) {$d$};

        \draw [->] (a) to (b);
        \draw [->] (b) to (c);

        \draw [->, bend right] (c) to (d);
        \draw [->, bend right] (d) to (c);
    \end{tikzpicture}
     \end{center} 
 \caption{Abstract argumentation framework $F_1$ from Example \ref{example:af_example}.}
  \label{fig:example1} 
\end{figure}
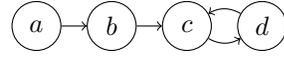
    We see that $F_1$ has three complete extensions $\{a\},\{a,c\}$ and $\{a,d\}$, where only the last two are preferred. In addition, we see that, $a \in sk_{co}(F_1)$, $c, d \in cred_{co}(F_1)$, and $b\in rej_{co}(F_1)$.
\end{example}

An \emph{isomorphism} $\gamma$ between two AFs $F= (A, R)$ and $F'=(A', R')$ is a bijective function $\gamma: F \rightarrow F'$ such that $(a, b) \in R$ iff $(\gamma(a), \gamma(b)) \in R'$ for all $a,b \in A$.

\paragraph{Argument-ranking Semantics}
Instead of reasoning based on the acceptance of sets of arguments, \emph{argument-ranking semantics} (also know as \emph{ranking-based semantics})~\cite{DBLP:conf/sum/AmgoudB13a}
were introduced to focus on the strength of a single argument.
Note that the order returned by an argument-ranking semantics is not necessarily total, i.\,e. not every pair of arguments is comparable.

\begin{definition}
An \emph{argument-ranking semantics} $\rho$ is a function which maps an AF $F=(A,R)$ to a preorder\footnote{A preorder is a (binary) relation that is \emph{reflexive} and \emph{transitive}.} $\succeq_{F}^{\rho}$ on $A$.   
\end{definition}
Intuitively $a \succeq_{F}^{\rho} b$ means that $a$ is at least as strong as $b$ in $F$. 
We define the usual abbreviations as follows;
$a \succ^{\rho}_{F} b$ denotes \emph{strictly stronger}, i.\,e. $a \succeq^{\rho}_{F} b$ and $b \not\succeq^{\rho}_{F} a$. 
Moreover, $a \simeq^{\rho}_{F} b$ denotes \emph{equally strong}, i.\,e. $a \succeq^{\rho}_{F} b$ and $b \succeq^{\rho}_{F} a$. $a \bowtie^\rho_F b$ denotes \emph{incomparability}, meaning that neither $a \succeq^{\rho}_{F} b$ nor $b \succeq^{\rho}_{F} a$.

Traditionally the development of argument-ranking semantics is guided by a principle-based approach \cite{DBLP:journals/flap/TorreV17}. Each principle embodies a different property for argument rankings. We recall one of the most fundamental principle~\cite{DBLP:conf/aaai/BonzonDKM16} as well as a newer one, which is closer to the extension-based reasoning process~\cite{DBLP:conf/comma/BlumelT22}.

\begin{definition}\label{def:ranking_principles}
    An argument-ranking semantics $\rho$ satisfies the respective principle iff for all AFs $F=(A,R)$ and any $a,b \in A$:
    \begin{description}
        \item[Self-Contradicition (\emph{SC}).] Self-attacking arguments should be ranked worse than any other argument. 
        If $(a,a) \notin R$ and $(b,b) \in R$ then $a \succ^\rho_F b$.
        \item[$\sigma$-Compatibility ($\sigma$-\emph{C}).] Credulously accepted arguments should be ranked better than rejected arguments.
        For an extension-based semantics $\sigma$ it holds that if $a \in cred_{\sigma}(F)$ and $b \in rej_\sigma(F)$, then $a \succ^\rho_F b$.
    \end{description}
\end{definition}

Note that principles are not always compatible with each other \cite{DBLP:conf/sum/AmgoudB13a}.

\paragraph{Extension-ranking Semantics}
Extension-ranking semantics defined in~\citet{DBLP:conf/ijcai/SkibaRTHK21} are a generalisation of extension-based semantics. These semantics are used to formalise whether a set $E$ is more plausible to be accepted than another set $E'$. 
\begin{definition}
Let $F= (A,R)$ be an AF. 
    An \emph{extension ranking} on $F$ is a preorder 
    over the powerset of arguments $2^{A}$. An \emph{extension-ranking semantics} $\tau$ is a function that maps each $F$ to an extension ranking $\sqsupseteq^{\tau}_{F}$ on $F$.  
\end{definition}
For an AF $F= (A,R)$, an extension-ranking semantics $\tau$ and two sets $E,E'\subseteq A$ we say $E$ is \emph{at least as plausible to be accepted} as $E'$ with respect to $\tau$ in $F$ if $E \sqsupseteq^{\tau}_{F} E'$. We define the usual abbreviations as follows: $E$ is \emph{strictly more plausible to be accepted} than $E'$ (denoted as $E \sqsupset^{\tau}_{F} E'$) if $E \sqsupseteq^{\tau}_{F} E'$ and not $E' \sqsupseteq^{\tau}_{F} E$; $E$ and $E'$ are \emph{equally as plausible to be accepted} (denoted as $E \equiv^{\tau}_{F} E'$) if $E \sqsupseteq^{\tau}_{F} E'$ and $E' \sqsupseteq^{\tau}_{F} E$; $E$ and $E'$ are \emph{incomparable} (denoted $E \asymp^\tau_{F} E'$) if neither $E \sqsupseteq^{\tau}_{F} E'$ nor $E' \sqsupseteq^{\tau}_{F} E$.

\citet{DBLP:conf/ijcai/SkibaRTHK21} defined a family of approaches to define such extension-ranking semantics. Their semantics are generalisations of the classical extension-based semantics. Using these semantics we can state that a set is ``closer'' to being admissible, than another set. 
Before we define the semantics, we recall the \emph{base relations}, each of them generalises one aspect of extension-based reasoning. 
\begin{definition}[Base Relations~\cite{DBLP:conf/ijcai/SkibaRTHK21}]
    Let $F=(A,R)$ be an AF and $E \subseteq A$ where the function $\mathcal{F}^*_F: \mathcal{P}(A) \rightarrow \mathcal{P}(A)$ is defined as $\mathcal{F}^*_{F}(E) = \bigcup^{\infty}_{i = 1} \mathcal{F}^*_{i,F}(E)$ over the powerset $\mathcal{P}(A)$ of $A$ with 
        $\mathcal{F}^*_{1,F}(E)= E$ and
        $\mathcal{F}^*_{i,F}(E)= \mathcal{F}^*_{i-1,F}(E) \cup (\mathcal{F}_{F}(\mathcal{F}^*_{i-1,F}(E)) \setminus E^-_F)$.    
    Each \emph{base relation} $\alpha \in \{CF,UD, DN, UA\}$ is defined via:
    \begin{itemize}
        \item $CF_F(E) = \{(a,b) \in R | a,b \in E\}$;
        \item $UD_F(E) = E \setminus \mathcal{F}_F(E)$; 
        \item $DN_F(E) = \mathcal{F}^*_F(E) \setminus E$;
        \item $UA_F(E) = \{a \in A \setminus E | \neg \exists b \in E: (b,a) \in R\}$;
    \end{itemize}
\end{definition}
For every base relation, the corresponding $\alpha$ \emph{base extension ranking} $\sqsupseteq^\alpha_F$ for $E, E' \in A$ is given by:
$$ E \sqsupseteq^\alpha_F E' \text{ iff } \alpha_F(E) \subseteq \alpha_F(E')$$
$CF_F(E)$ gives us the conflicts of the set $E$, $UD_F(E)$ the undefended arguments of $E$, $DN_F(E)$ the defended but not included arguments of $E$, where $\mathcal{F}^*_F$ is a generalisation of the characteristic function modelling consistent defence, and $UA_F(E)$ the unattacked arguments of $E$.

By combining these base relations, we denote the extension-ranking semantics.
\begin{definition}\label{def:r-sigma}
    Let $F= (A,R)$ be an AF and $E,E' \subseteq A$. We define:
         \emph{Admissible extension-ranking semantics} $r\text{-}ad$ via $E \sqsupseteq^{r\text{-}ad}_F E'$ iff $E \sqsupset^{CF}_F E'$ or ($E \equiv^{CF}_{F} E'$ and $E \sqsupseteq^{UD}_F E'$).
         \emph{Complete extension-ranking semantics} $r\text{-}co$ via $E \sqsupseteq^{r\text{-}co}_F E'$ iff $E \sqsupset^{r\text{-}ad}_F E'$ or ($E \equiv^{r\text{-}ad}_{F} E'$ and $E \sqsupseteq^{DN}_F E'$).
        \emph{Preferred extension-ranking semantics} $r\text{-}pr$ via $E \sqsupseteq^{r\text{-}pr}_F E'$ iff $E \sqsupset^{r\text{-}ad}_F E'$ or ($E \equiv^{r\text{-}ad}_{F} E'$ and $E' \subseteq E$).
         \emph{Grounded extension-ranking semantics} $r\text{-}gr$ via $E \sqsupseteq^{r\text{-}gr}_F E'$ iff $E \sqsupset^{r\text{-}co}_F E'$ or ($E \equiv^{r\text{-}co}_{F} E'$ and $E \subseteq E'$).
         \emph{Semi-stable extension-ranking semantics} $r\text{-}sst$ via $E \sqsupseteq^{r\text{-}sst}_F E'$ iff $E \sqsupset^{r\text{-}co}_F E'$ or ($E \equiv^{r\text{-}co}_{F} E'$ and $E \sqsupseteq^{UA}_F E'$).
\end{definition}
In words, one set $E$ is at least as plausible to be accepted as $E'$ with respect to the admissible ranking semantics, if $E$ has less conflicts than $E'$ or if they have the same conflicts, then we look at the undefended arguments.
\begin{example}\label{ex:extension rankings}
    Continuing Example \ref{example:af_example}. Comparing sets $E_1= \{c,d\}$ and $E_2 = \{a,c,d\}$ with the admissible ranking semantics, we see $E$ and $E'$ have the same conflicts $(c,d)$ and $(d,c)$, but $E_2$ defends argument $c$ from $b$, so $E_2 \sqsupset^{r\text{-}ad}_{F_1} E_1$, $E_2$ is closer to be an admissible set, then $E_1$.
\end{example}

Extension-ranking semantics also follow a principle-based approach.
Before we recall the principles defined in~\citet{DBLP:conf/ijcai/SkibaRTHK21}, we need to introduce the notion of most plausible sets, i.\,e. sets for which we cannot find any other sets ranked strictly better.
\begin{definition}[Most plausible sets]
Let $F=(A,R)$ be an AF, $E,E'\subseteq A$ two sets of arguments and $\tau$ an extension-ranking semantics. 
    We denote by $max_{\tau}(F)$ the maximal (or \emph{most plausible}) elements of the extension ranking $\sqsupseteq^{\tau}_{F}$, i.\,e. $max_{\tau}(F) = \{E \subseteq A \mid \nexists E' \subseteq A \text{ with } E' \sqsupset^{\tau}_{F} E\}$.
\end{definition}
The principle $\sigma$\emph{-generalisation} states, that the most plausible sets should coincide with the $\sigma$-extensions. 
\begin{definition}[$\sigma$\emph{-Gen}]
Let $\sigma$ be an extension-based semantics and $\tau$ an extension-ranking semantics. $\tau$ satisfies
$\sigma$\emph{-soundness} iff for all $AF$: $max_{\tau}(AF) \subseteq \sigma(AF)$.
$\sigma$\emph{-completeness} iff for all $AF$: $max_{\tau}(AF) \supseteq \sigma(AF)$.
$\sigma$\emph{-generalisation} iff $\tau$ satisfies both $\sigma$-soundness and $\sigma$-completeness.
\end{definition}
Additional principles can be found in \citet{DBLP:conf/ijcai/SkibaRTHK21}.

\section{Social Ranking}

Let us now introduce the final piece of our puzzle, \emph{social rankings}.
Let $S$ be a set of arbitrary objects like players of a sports team, employees of a company or arguments in an AF and $\mathcal{P}(S)$ its powerset.
A \emph{social ranking function} $\xi$, as introduced by~\citet{DBLP:conf/aldt/MorettiO17},
maps a preorder $\sqsupseteq$ on $\mathcal{P}(S)$ to a partial order on $S$.
The most prominent social ranking function is the \emph{lexicographic excellence operator} 
(lex-cel), which was first proposed by \citet{Bernardi19}.
It ranks elements based on the best sets they appear in, 
proceeding lexicographically if there are ties.
In order to make this idea formal,
we need a measure of the quality of a set
that allows us to compare any two sets. For this,
we introduce the notion 
of the \emph{rank} of a set.

\begin{definition}
\label{def: rank}
    Let $X \subseteq S$ be a subset of $S$ and $\sqsupseteq$ a preorder on $\mathcal{P}(S)$. Moreover, let 
    $X_1, X_2, \dots, X_k$ be a longest sequence
    such that $X_1 \sqsupset X_2 \sqsupset \dots \sqsupset X_k \sqsupset X$.
    Then, we define the \emph{rank of $X$}, as $\Rank_\sqsupseteq(X) := k+1$. 

    Moreover, for an element $x \in S$, we define 
    \[x_{k,\sqsupseteq} := |\{X \in \mathcal{P}(S) \mid \Rank_\sqsupseteq(X)= k, x \in X\}|,\]
    as the number of rank $k$ subsets that $x$ is contained in.
\end{definition}
With this definition at hand, we can now define the \emph{lex-cel social ranking function}.

\begin{definition}
    Let $x, y \in S$ be two elements of $S$. 
    We define the \emph{lex-cel} ranking $\succeq^{lex\text{-c}el}$ by (i) $x \succ^{lex\text{-}cel}_{\sqsupseteq} y$ if there exists a $k$ such that $x_{i,\sqsupseteq} = y_{i,\sqsupseteq}$ for all $i < k$ and $x_{k,\sqsupseteq} > y_{k,\sqsupseteq}$ and (ii) $x \simeq^{lex\text{-}cel}_{\sqsupseteq} y$ if $x_{i,\sqsupseteq} = y_{i,\sqsupseteq}$ for all $i \in \mathbb{N}$.
\end{definition}
Intuitively, an object $x$ is ranked better than $y$  by the lexicographic excellence operator if $x$ is contained in more highly ranked sets than $y$. 
\begin{example}\label{ex:lex-cel}
    Consider our running example from Example~\ref{example:af_example}
    and we are using the complete extension-ranking semantics.
    Then, we have three sets with rank $1$, namely the complete extensions.
    The argument $a$ is contained in all three sets with rank $1$,
    while $c$ and $d$ are only contained in one such set each. Consequently
    $a \succeq^{\text{lex-cel}} c$ and $a \succeq^{\text{lex-cel}} d$.
    Now, the final admissible sets $\emptyset$ and $\{d\}$ are dominated by all three 
    complete extensions under the complete extension-ranking semantics, 
    but dominate all non-admissible sets. Therefore, they are the only sets with
    rank $2$. It follows that $d \succeq^{\text{lex-cel}} c$ as both are contained
    in the same number of sets with rank $1$, but $d$ is contained in more sets with
    rank $2$.    
\end{example}

Similarly to argument- and extension-ranking semantics, social rankings 
have been studied axiomatically. Let us first introduce an axiom that has been part of a characterization of the lex-cel function under the assumption that 
the ranking over sets is a total preorder \cite{Bernardi19}.
As we generally do not assume the ranking over extensions to be a total 
preorder, the characterisation does not hold in our setting,
but it is straightforward to see that the lex-cel function still satisfies
this axiom.

\begin{definition}[Independence from the worst set]
    Let $\sqsupseteq$ be a preorder on $\mathcal{P}(S)$ and $X,Y \subseteq S$, let 
    \[w = \max_{X \in \mathcal{P}(S)}(\Rank_\sqsupseteq(X))\]
    and assume that $\sqsupseteq^*$ is another preorder on $\mathcal{P}(S)$ 
    for which it holds 
    \begin{itemize}
        \item $\Rank_{\sqsupseteq^*}(X) = \Rank_\sqsupseteq(X)$ for all $X \in \mathcal{P}(S)$
        s.t. $\Rank_\sqsupseteq(X) < w$.
        \item $\Rank_{\sqsupseteq^*}(X) \geq w $  for all $X \in \mathcal{P}(S)$
        s.t. $\Rank_\sqsupseteq(X) = w$.
    \end{itemize}
    Then for any social ranking function that satisfies \emph{Independence from the worst set},
    we must have that $x \succ_{\sqsupseteq} y$ implies $x \succ_{\sqsupseteq*} y$.
\end{definition}
Intuitively, this axiom states that if one element is already strictly worse than another,
and we further subdivide the worst sets, this strict preference remains.
As we will see later, this axiom will be crucial 
for satisfying our desired refinement property.
We introduce a new axiom inspired by
the classical \emph{Pareto-efficiency} concept \cite{moulin2004fair},
that is satisfied by most reasonable social ranking functions.

\begin{definition}[Pareto-efficiency]
    Let $\sqsupseteq$ be a preorder on $\mathcal{P}(S)$ and let $x,y \in S$ be elements such that
    \begin{itemize}
        \item $\Rank_{\sqsupseteq}(Z \cup \{x\})\leq \Rank_{\sqsupseteq}(Z \cup \{y\})$ for all $Z\in \mathcal{P}(S)$ with $x,y\notin Z$;
        \item $\Rank_{\sqsupseteq}(Z \cup \{x\}) < \Rank_{\sqsupseteq}(Z \cup \{y\})$ for at least one $Z\in \mathcal{P}(S)$ with $x,y\notin Z$.
    \end{itemize}
    Then, for any social ranking function $\xi$ that satisfies \emph{Pareto-efficiency}, we must have 
    $x \succ^{\xi}_{\sqsupseteq} y$.
\end{definition}
Furthermore, we establish the novel \emph{Dominating set} axiom which captures the intuition that if there exists a set containing the object $x$ that is ranked better than every set that contains some other object $y$, then $x$ must be ranked better than $y$ by the social ranking function.

\begin{definition}[Dominating set]\label{def:dominatingset}
    Let $\sqsupseteq$ be a preorder on $\mathcal{P}(S)$ and let $x,y \in S$ such that there exists $X \subseteq S$ with $x \in X$ and for all $Y\subseteq S$ with $y \in Y$ then $X \sqsupset Y$. A social ranking function $\xi$ satisfies \emph{Dominating set} iff $x \succ^{\xi}_\sqsupseteq y$.
\end{definition}

Crucially, Independence from the Worst Set and Pareto-efficiency together imply 
Dominating set. 

\begin{theorem}
    Any social ranking function that satisfies Independence from the worst set
    and Pareto-efficiency also satisfies Dominating set.
\end{theorem}

\section{Defining Argument-ranking Semantics via Social Rankings}\label{sec:arg ranking social ranking}
The idea of combining extension-ranking semantics with argument-ranking semantics was briefly discussed by \citet{DBLP:conf/ijcai/SkibaRTHK21}, where, based on a ranking over sets of arguments, a ranking over arguments was defined. In this section, we take a more general view on this approach and define argument-ranking semantics based on an extension-ranking. 

\subsection{The Singleton Approach}
The most immediate way of ranking objects based on a ranking over sets of objects is to restrict the ranking over sets of objects to the singleton sets. The behaviour of these singleton sets then gives us insight into the relationship between the objects. If $\{a\}$ is ranked better than $\{b\}$ then $a$ is also ranked better than $b$ in the restricted ranking.
 \begin{definition}
 \label{def: Soc Rank Arg Rank}
     Let $F= (A,R)$ be an AF and $\tau$ any extension-ranking semantics. For any two arguments $a,b \in A$, the \emph{singleton argument-ranking semantics} $\mathcal{ST}_\tau$ is defined  via $a \succeq^{\mathcal{ST}_\tau}_F b$ iff $\{a\} \sqsupseteq^{\tau}_F \{b\}$.
 \end{definition}
\citet{Bernardi19} have already discussed that a ranking based solely on singleton sets is too simplistic, as it ignores all the information provided by rankings over sets with cardinality larger than one. 
In the context of abstract argumentation, this is also the case.
\begin{example}\label{exp:naive}
    Consider the AF $F_1$ from Example \ref{example:af_example}. We use $r$-$ad$ as the underlying extension-ranking semantics, then since $\{a\}$ and $\{d\}$ are admissible we have $a =^{\mathcal{ST}_{r\text{-}ad}}_{F_1} d$ and both $\{b\}$ and $\{c\}$ are conflict-free and not defended, so $$a =^{\mathcal{ST}_{r\text{-}ad}}_{F_1} d \succ^{\mathcal{ST}_{r\text{-}ad}}_{F_1} b =^{\mathcal{ST}_{r\text{-}ad}}_{F_1} c$$
 \end{example}
 The example shows that $\mathcal{ST}_{r\text{-}ad}$ has a limited expressiveness, since $\mathcal{ST}_{r\text{-}ad}$ has at most three ranks. The first rank contains arguments for which the singleton set is admissible and the lowest rank are all self-attacking arguments, in between are the non-admissible sets, but conflict-free singleton sets. 
 Observe also that this approach does not refine the 
 classical skeptical/credulous acceptance classification, as
in Example~\ref{exp:naive} the credulously accepted
 argument $c$ is ranked the same as the rejected argument $b$.

\subsection{Generalised Social Ranking Argument-ranking Semantics}

In the literature, a number of different social ranking functions that are more complex than the singleton approach can be found~\cite{AlgabaMRS21,Bernardi19,DBLP:conf/ijcai/HaretKMO18,DBLP:conf/ijcai/KhaniMO19}. 
To understand what constitutes a good social ranking function in this context, we define a general argument-ranking semantics using social ranking solutions with respect to an extension ranking.
\begin{definition}
    Let $F= (A,R)$ be an AF and $\xi$ a social ranking function with respect to extension ranking $\tau$. For any $a,b \in A$ we call $\xi_{\tau}$ the \emph{Social ranking argument-ranking semantics} such that:
       $ a \succeq^{\xi_\tau}_F b \text{ iff } a \succeq^\xi_\tau b$
    \end{definition}
In words, an argument $a$ is at least as strong as argument $b$ if the social ranking function $\xi$ applied to the extension ranking $\tau$ returns that $a$ is at least as strong as $b$. 
\begin{example}
    In Example \ref{ex:lex-cel} the \emph{social ranking argument ranking} $\text{lex-cel}_{\text{r-co}}$ was applied to the AF $F_1$ from Example \ref{example:af_example} where lex-cel is used and the underlying extension-ranking is r-co. Thus, the resulting argument ranking is:
    $$a \succ^{\text{lex-cel}_{\text{r-co}}}_{F_1} d \succ^{\text{lex-cel}_{\text{r-co}}}_{F_1} c \succ^{\text{lex-cel}_{\text{r-co}}}_{F_1} b$$
\end{example}
Any social ranking function can be used to rank arguments. \citet{DBLP:conf/ijcai/SkibaRTHK21} have used a variation of the lex-cel social ranking function in their definitions, where an argument $a$ is ranked better than another argument $b$ if we can find a set $E$ containing $a$ which is ranked better than any set containing $b$. 

\begin{definition}[\cite{DBLP:conf/ijcai/SkibaRTHK21}]\label{def:focusing}
    Let $F=(A,R)$ be an AF, $a,b \in A$, and $\tau$ be an extension-ranking semantics. We define an argument-ranking semantics $\succeq^\tau_{F}$ via $a \succeq^\tau_F b$ iff there is a set $E$ with $a \in E$ s.t. for all sets $E'$ with $b \in E'$ we have $E \sqsupseteq^{\tau}_{F} E'$. 
\end{definition}
\begin{example}\label{ex:focusing}
Continuing with Example \ref{example:af_example}. Using $r\text{-}ad$ as the underlying extension-ranking semantics, we see that $\{a,c\}$ and $\{a,d\}$ are admissible sets, hence also among the most plausible sets. Since $r\text{-}ad$ satisfies $ad$-generalisation there cannot be any set containing $b$ ranked strictly better than these two sets. This observation result in the ranking $a \simeq^{r\text{-}ad}_{F_1} c \simeq^{r\text{-}ad}_{F_1} d \succ^{r\text{-}ad}_{F_1} b$. 
Since $\{a,c\}, \{a,d\} \in \sigma(F)$ for $\sigma \in \{co,pr,stb\}$ the ranking is the same for any $r\text{-}\sigma$. Only for $r\text{-}gr$ the induced ranking differs:  $a \succ^{r\text{-}gr}_{F_1} c \simeq^{r\text{-}gr}_{F_1} d \succ^{r\text{-}gr}_{F_1} b$.
\end{example}

The previous examples show that where $\text{lex-cel}_{\text{r-co}}$ can differentiate $a,b,c, \text{and} \, d$, the argument ranking of Definition \ref{def:focusing} under $r$-$co$ does not allow to distinguish among $a$, $c$ and $d$. 
Indeed, lex-cel is more informative than the operator of \citet{DBLP:conf/ijcai/SkibaRTHK21}.

\begin{proposition}
    Let $F=(A,R)$ be an AF, $a,b \in A$ and $\tau$ an extension ranking. If $a \succeq^{\text{lex-cel}_\tau}_F b$, then $a \succeq^\tau_F b$. 
\end{proposition}

In particular, $\text{lex-cel}_{\text{r-co}}$ allows us to distinguish among skeptically and credulously accepted arguments ($a$ is ranked before $c$ and $d$). To capture this, we define a skeptical variation of $\sigma$-Compatibility. Skeptical accepted arguments are part of every $\sigma$-extension, therefore they should be ranked better than any other argument. 
\begin{definition}
    Let $F=(A,R)$ be an AF, $a,b \in A$, and let $\sigma$ be a extension-based semantics. Argument-ranking semantics $\rho$ satisfies $\sigma$-\emph{skeptical-Compatibility} ($\sigma$-sk-C) iff $a \in sk_{\sigma}(F)$ and $b \notin sk_{\sigma}(F)$ then $a \succ^\rho_F b$.  
\end{definition}

Crucially, a well-behaved argument ranking semantics should be able to rank skeptically accepted arguments before all credulously accepted ones, which should be, in turn, ranked before all rejected arguments. This translated to the following refinement property.

\begin{definition}[$\sigma$-Refinement]\label{def:refinement}
    Argument-ranking semantics $\rho$ satisfies $\sigma$-\emph{Refinement} if $\rho$ satisfies $\sigma$-C and $\sigma$-sk-C for extension-based semantics $\sigma$ for all AFs $F$.
\end{definition}

Next, we investigate principles for social ranking based argument-ranking semantics from a general point of view. We are interested in understanding which combinations of axioms for extension-ranking semantics $\tau$ and social ranking functions $\xi$ represent necessary and sufficient conditions for the corresponding social ranking argument-ranking semantics $\xi_\tau$ to satisfy fundamental principles of argument rankings, chiefly among them our desired refinement property. This translates to the following research questions:
\begin{description}
    \item[RQ1] What properties of $\xi$ and $\tau$ are adequate to ensure that $\xi_\tau$ satisfies a specific principle for argument-ranking semantics?
    \item[RQ2] What properties of $\xi_\tau$ are adequate to ensure that $\xi$ satisfies a specific principle for social ranking functions when combined with a certain extension-ranking semantics $\tau$?
\end{description}
Next, we address RQ1 and RQ2 for a selected number of principles for argument ranking semantics.
 
\subsubsection{Sufficient Conditions for Social Ranking Argument-ranking semantics}
We start by considering $\sigma$-\emph{Compatibility}. For this we show that \emph{Independence from the worst set} together with the quite weak condition \emph{Pareto-efficiency}, is sufficient for satisfying $\sigma$-\emph{C}.

\begin{theorem}
    Let $F=(A,R)$ be an argumentation framework, $\tau$ an extension-ranking semantics satisfying $\sigma$-\emph{generalisation} for the extension semantics $\sigma$ and $\xi$ a social ranking function that satisfies \emph{Independence from the worst set} and \emph{Pareto-efficiency}. 
    Then, $\xi_\tau$ satisfies $\sigma$-\emph{C}. 
\end{theorem}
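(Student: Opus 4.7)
The plan is to reduce the claim to Pareto-efficiency applied to a two-layered auxiliary preorder, then lift the result to $\sqsupseteq^\tau_F$ using Independence from the worst set. Fix $a \in cred_\sigma(F)$ and $b \in rej_\sigma(F)$; by the credulous acceptance of $a$, pick some $E^* \in \sigma(F)$ with $a \in E^*$, and note that $b \notin E$ for every $E \in \sigma(F)$. Since $\tau$ satisfies $\sigma$-generalisation, $\max_\tau(F) = \sigma(F)$, so $\Rank_{\sqsupseteq^\tau_F}(X) = 1$ exactly for the $\sigma$-extensions and $\Rank_{\sqsupseteq^\tau_F}(X) \geq 2$ otherwise.

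Next I would construct an auxiliary preorder $\sqsupseteq^{\flat}$ on $\mathcal{P}(A)$ that collapses everything outside $\sigma(F)$ into a single worst layer: declare $X \equiv^\flat Y$ whenever $X,Y \in \sigma(F)$ or $X,Y \notin \sigma(F)$, and $X \sqsupset^\flat Y$ whenever $X \in \sigma(F)$ and $Y \notin \sigma(F)$. One checks routinely that $\sqsupseteq^\flat$ is indeed a preorder. Under $\sqsupseteq^\flat$, every $\sigma$-extension has rank $1$ and every other set has rank $2$.

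The third step is to verify the hypotheses of Pareto-efficiency for the pair $(a,b)$ under $\sqsupseteq^\flat$. For any $Z \subseteq A$ with $a,b \notin Z$, the set $Z \cup \{b\}$ contains the rejected argument $b$, so $Z \cup \{b\} \notin \sigma(F)$ and thus $\Rank_{\sqsupseteq^\flat}(Z \cup \{b\}) = 2$; in particular $\Rank_{\sqsupseteq^\flat}(Z \cup \{a\}) \leq 2 = \Rank_{\sqsupseteq^\flat}(Z \cup \{b\})$. For strict inequality on at least one $Z$, take $Z = E^* \setminus \{a\}$: since $b \notin E^*$ we have $b \notin Z$, and $Z \cup \{a\} = E^*$ is a $\sigma$-extension of rank $1$, while $Z \cup \{b\}$ has rank $2$. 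Hence Pareto-efficiency delivers $a \succ^\xi_{\sqsupseteq^\flat} b$.

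Finally, I would transfer this strict preference from $\sqsupseteq^\flat$ to $\sqsupseteq^\tau_F$ via Independence from the worst set, instantiated with the starting preorder $\sqsupseteq^\flat$ (whose maximum rank is $w = 2$) and the refined preorder $\sqsupseteq^\tau_F$: the two preorders agree on rank-$1$ sets (both single them out as exactly $\sigma(F)$), and every set of $\sqsupseteq^\flat$-rank $2$ has $\sqsupseteq^\tau_F$-rank at least $2$, meeting the axiom's hypotheses. Consequently $a \succ^\xi_{\sqsupseteq^\flat} b$ forces $a \succ^\xi_{\sqsupseteq^\tau_F} b$, i.e., $a \succ^{\xi_\tau}_F b$, which is precisely what $\sigma$-C demands. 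The main subtlety I anticipate is recognizing that one cannot simply invoke the Dominating set lemma with $E^*$ under $\tau$: being maximal under $\tau$ does not entail that $E^*$ strictly dominates every non-$\sigma$-extension containing $b$, only that nothing strictly dominates $E^*$. The flattening step is what sidesteps this issue, and Independence from the worst set is exactly the tool that makes the flattening reversible.
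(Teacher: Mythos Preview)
Your proposal is correct and follows essentially the same strategy as the paper: define a two-level auxiliary preorder separating $\sigma$-extensions from the rest, apply Pareto-efficiency there using the witness $E^*\setminus\{a\}$, and then lift the strict preference to $\sqsupseteq^\tau_F$ via Independence from the worst set using $\sigma$-generalisation to match the rank-$1$ sets. Your formulation of $\sqsupseteq^\flat$ is in fact slightly more careful than the paper's (yours is a genuine preorder, with equivalences inside each layer), and your closing remark on why Dominating set alone does not suffice is a useful observation that the paper does not make explicit.
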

\begin{proof}
Consider first the extension ranking $\sqsupseteq^\sigma$ defined by
$X \sqsupseteq_F^\sigma Y$ if and only if $X \in \sigma(F)$ and $Y \not \in \sigma(F)$.
Furthermore, let $x \in cred_\sigma(F)$ and $y \in rej_\sigma(F)$.
Then, we claim that $x \succ^{\sqsupseteq^\sigma} y$ for any social ranking function
$\xi$ that satisfies Pareto-efficiency: As $x$ is credulously accepted,
there exists a $X \in \sigma(F)$ with $x \in X$ and
as $y$ is rejected, we have $Y \not \in \sigma(F)$ for all $y \in Y$.
It follows that $\Rank_{\sqsupseteq^\sigma}(X \setminus \{x\}) \cup \{x\}) = 1 <
\Rank_{\sqsupseteq^\sigma}((X \setminus \{x\}) \cup \{y\})$.
On the other hand, there can be no $S$ such that $\Rank_{\sqsupseteq^\sigma}(S \cup \{y\}) < 
\Rank_{\sqsupseteq^\sigma}(S \cup \{x\})$
as, due to the fact that $w = \max_{X \subseteq A} (\Rank_{\sqsupseteq_F^\sigma}(X)) = 2$,
this would imply $\Rank_{\sqsupseteq^\sigma}(S \cup \{y\}) = 1$
and therefore $S \cup \{y\} \in \sigma(F)$.

Furthermore, as $\tau$ satisfies $\sigma$-\emph{generalisation}, we know that
$\Rank_{\sqsupseteq_F^\sigma}(X) = 1$ if and only if 
$\Rank_{\sqsupseteq_F^\tau}(X) = 1$. 
Therefore, it follows from Independence from the worst set that
$x \succ^{\xi_{\sqsupseteq_\sigma}}_F y$ implies $x \succ^{\xi_{\tau}}_F y$.
Consequently, $\xi_{\tau}$ satisfies $\sigma$-\emph{C}.
\end{proof}

Next, we show that Independence from the worst set and Pareto-efficiency together 
also imply that every skeptically accepted argument is ranked before any 
argument that is not skeptically accepted.

\begin{theorem}
    Let $F=(A,R)$ be an AF, $\tau$ an extension-ranking semantics satisfying $\sigma$-generalisation for an extension-based semantics $\sigma$, then if social ranking function $\xi$ satisfies \emph{Pareto-efficiency} and \emph{Independence from the worst set} then $\xi_\tau$ satisfies $\sigma$-sk-C.
\end{theorem}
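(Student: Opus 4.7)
The plan is to mirror the structure of the proof of the preceding theorem (for $\sigma$-C), but with a more careful application of Pareto-efficiency that exploits the definition of skeptical acceptance. First, I introduce the same auxiliary extension ranking $\sqsupseteq^\sigma$ on $\mathcal{P}(A)$ defined by $X \sqsupseteq^\sigma_F Y$ iff $X \in \sigma(F)$ and $Y \notin \sigma(F)$, so that $\Rank_{\sqsupseteq^\sigma}(X)=1$ precisely when $X \in \sigma(F)$ and $\Rank_{\sqsupseteq^\sigma}(X)=2$ otherwise; in particular $w := \max_{X \subseteq A} \Rank_{\sqsupseteq^\sigma}(X)=2$.

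Second, I argue $a \succ^{\xi_{\sqsupseteq^\sigma}}_F b$ using Pareto-efficiency. The key observation is that since $a \in sk_\sigma(F)$, every $\sigma$-extension contains $a$. Hence, for any $Z \subseteq A \setminus \{a,b\}$, the set $Z \cup \{b\}$ does not contain $a$ and therefore cannot be a $\sigma$-extension, so $\Rank_{\sqsupseteq^\sigma}(Z \cup \{b\}) = 2$. This already gives the weak inequality $\Rank_{\sqsupseteq^\sigma}(Z \cup \{a\}) \leq \Rank_{\sqsupseteq^\sigma}(Z \cup \{b\})$ for all such $Z$. For the strict inequality, I use that $b \notin sk_\sigma(F)$: either $\sigma(F) = \emptyset$, in which case the statement is vacuous (every argument is vacuously skeptically accepted and the premise $b \notin sk_\sigma(F)$ fails), or there exists $E \in \sigma(F)$ with $b \notin E$. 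Since $a \in sk_\sigma(F) \subseteq E$, the set $Z := E \setminus \{a\}$ satisfies $a,b \notin Z$, $Z \cup \{a\} = E \in \sigma(F)$ so $\Rank_{\sqsupseteq^\sigma}(Z \cup \{a\})=1$, and $\Rank_{\sqsupseteq^\sigma}(Z \cup \{b\})=2$. Pareto-efficiency then yields $a \succ^{\xi_{\sqsupseteq^\sigma}}_F b$.

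Third, I lift this strict preference from $\sqsupseteq^\sigma$ to $\sqsupseteq^\tau$ via Independence from the worst set, exactly as in the proof of the previous theorem. Because $\tau$ satisfies $\sigma$-generalisation, the rank-$1$ sets of $\sqsupseteq^\tau$ coincide with $\sigma(F)$, which are precisely the rank-$1$ sets of $\sqsupseteq^\sigma$; all remaining subsets of $A$ have rank $\geq 2 = w$ under $\sqsupseteq^\tau$. The two hypotheses of Independence from the worst set (equality of ranks below $w$, and the worst-rank sets of $\sqsupseteq^\sigma$ keeping rank $\geq w$ in $\sqsupseteq^\tau$) are therefore satisfied, so $a \succ^{\xi_{\sqsupseteq^\sigma}}_F b$ transfers to $a \succ^{\xi_\tau}_F b$, establishing $\sigma$-sk-C.

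I expect the main obstacle to be the Pareto-efficiency step, and more precisely the strict inequality clause: one has to use both hypotheses on $a$ and $b$ simultaneously, and the witness $Z = E \setminus \{a\}$ needs the careful choice of an extension $E$ that excludes $b$ (exploiting $b \notin sk_\sigma(F)$) while automatically containing $a$ (exploiting $a \in sk_\sigma(F)$). The remaining steps are essentially bookkeeping once the auxiliary ranking and the previous theorem's template are in place.
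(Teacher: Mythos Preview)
Your proposal is correct and follows essentially the same approach as the paper: introduce the two-level auxiliary ranking $\sqsupseteq^\sigma$, verify the Pareto-efficiency hypotheses (weak inequality from $a\in sk_\sigma(F)$, strict inequality via an extension $E$ avoiding $b$ with witness $Z=E\setminus\{a\}$), and then lift the strict preference to $\sqsupseteq^\tau$ using Independence from the worst set together with $\sigma$-generalisation. The only cosmetic difference is that you argue the weak inequality directly (observing $Z\cup\{b\}$ omits $a$ and hence cannot be a $\sigma$-extension), whereas the paper phrases the same step as a proof by contradiction.
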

\begin{proof}
     Let $F=(A,R)$ be an AF, $\tau$ an extension-ranking semantics satisfying $\sigma$-generalisation for an extension-based semantics $\sigma$, and $\xi$ a social ranking function satisfying Pareto-efficiency and Independence from the worst set. Since $\sigma$-generalisation is satisfied by $\tau$ we can view $\tau$ as a refinement of the extension-ranking semantics $\tau'$ defined by $X \sqsupseteq^{\tau'}_F Y$ iff $X \in \sigma(F)$ and $Y \notin \sigma(F)$ for $X, Y \subseteq A$.  
     
     Now consider two arguments $a,b \in A$, such that $a \in sk_\sigma(F)$ and $b \notin sk_\sigma(F)$. Assume there exists a $Z \subseteq A\setminus\{a,b\}$ s.t. $\Rank_{\sqsupseteq_F^{\tau'}}(Z \cup \{b\}) < \Rank_{\sqsupseteq_F^{\tau'}}(Z \cup \{a\})$. Since $\tau'$ only has two levels, this implies $Z \cup \{b\} \in max_{\tau'}(F)$ and thus $Z \cup \{b\} \in \sigma(F)$. As 
     $a \in sk_\sigma(F)$, we must have $a \in Z \cup \{b\}$. However, as $a \notin Z$ we know that also $a \notin Z \cup \{b\}$. This is a contradiction and hence such a $Z$ cannot exist.
     
     Since $b \notin sk_\sigma(F)$ we know there it exists a set $\Bar{Y} \subseteq A$ s.t. $\Bar{Y} \in max_{\tau'}(F)$ and $b \notin \Bar{Y}$. Hence, because $a \in sk_{\sigma}(F)$ we know that $(\Bar{Y} \setminus \{a\}) \cup \{b\} \notin max_{\tau'}(F)$. Consequently, Pareto-efficiency implies $a \succ^{\xi_{\tau'}}_F b$. 

     As  $a \succ^{\xi_{\tau'}}_F b$ holds for $\tau'$, and $\tau$ is a refinement of $\tau'$ such that $max_{\tau'}(F) = max_{\tau}(F)$, it follows from Independence for the worst set that the same holds for $\tau$, i.\,e. $a \succ^{\xi_{\tau}}_F b$.
\end{proof}

Observe that Independence from the worst set means that we might have
to ignore most of the information that is available to us. Next we show that, at least for the rank information, this is essentially unavoidable if we want
to satisfy $cf$-\emph{C}. Let us first introduce an axiom that encodes the idea that 
we cannot ignore overwhelming, rank based evidence.  

\begin{definition}[Rank $k$-super majority]
Let $k \in \mathbb{N}$ be a natural number.
Then we say a social ranking function $\xi$ satisfies 
rank $k$-super majority if for all $x$ and $y$ such that 
\begin{multline*}
|\{Z \in \mathcal{P} \mid x,y \not \in Z \land \rank{Z \cup \{x\}} <
\rank{Z \cup \{y\}}\}| >\\
k\cdot|\{Z \in \mathcal{P} \mid x,y \not \in Z \land \rank{Z \cup \{y\}} 
< \rank{Z \cup \{x\}}\}|,
\end{multline*}
we have $x \succeq y$.
\end{definition}
In words, if there are $k$-times as many sets $Z$ s.t. the rank of 
$Z \cup \{x\}$ is strictly better than the rank of $Z \cup \{y\}$,
than the other way round, then $x$ must be (weakly) preferred to $y$.

\begin{proposition}
    Any social ranking function that, together with $r\text{-}cf$,
    satisfies $cf$-\emph{C} but violates rank $k$-super majority for every $k$.
\end{proposition}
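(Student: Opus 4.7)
The plan is to construct, for each fixed $k$, a specific AF that witnesses the violation. Let $F_n = (A_n, R_n)$ with $A_n = \{a, b, c_1, \dots, c_n\}$ and $R_n = \{(b, b)\} \cup \{(a, c_i) : 1 \leq i \leq n\}$, where $n$ will be chosen large depending on $k$. Since $\{a\}$ is conflict-free, $a \in cred_{cf}(F_n)$; since $b$ is self-attacking, $b$ lies in no conflict-free set, so $b \in rej_{cf}(F_n)$. By hypothesis, $cf$-$C$ then forces $a \succ^{\xi_{r\text{-}cf}}_{F_n} b$, and in particular $b \not\succeq^{\xi_{r\text{-}cf}}_{F_n} a$.

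Next I would compute, for every $Z \subseteq \{c_1, \dots, c_n\}$, the ranks under $\sqsupseteq^{r\text{-}cf}_{F_n}$. We have $CF_{F_n}(Z \cup \{a\}) = \{(a, c_i) : c_i \in Z\}$ while $CF_{F_n}(Z \cup \{b\}) = \{(b,b)\}$. Every subset of $CF_{F_n}(Z \cup \{a\})$ is realised as the conflict set of some $\{a\} \cup Z'$ with $Z' \subseteq Z$, giving a strict chain of length $|Z|$ above $Z \cup \{a\}$; this is tight because the chain length is at most $|CF_{F_n}(Z \cup \{a\})|$. Hence $\rank{Z \cup \{a\}} = |Z| + 1$. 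Analogously, the only strict subset of $\{(b,b)\}$ is $\emptyset$, realised by $\emptyset$, so $\rank{Z \cup \{b\}} = 2$.

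A counting argument concludes the proof. The set $\{Z \subseteq \{c_1, \dots, c_n\} : \rank{Z \cup \{b\}} < \rank{Z \cup \{a\}}\}$ equals $\{Z : |Z| \geq 2\}$, which has $2^n - n - 1$ elements; the opposite set is $\{\emptyset\}$, with exactly $1$ element. Choosing $n$ so that $2^n - n - 1 > k$, the premise of rank $k$-super majority applies with $x = b$ and $y = a$, forcing $b \succeq^{\xi_{r\text{-}cf}}_{F_n} a$ and contradicting $b \not\succeq^{\xi_{r\text{-}cf}}_{F_n} a$. Thus $\xi$ cannot satisfy rank $k$-super majority.

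The main obstacle is the rank computation under $r\text{-}cf$: one must pin down that $\rank{Z \cup \{a\}}$ is exactly $|Z| + 1$, which requires both the upper bound coming from the chain length being at most $|CF_{F_n}(Z \cup \{a\})|$ and the lower bound coming from the realisability of every intermediate subset of conflicts by a concrete set of arguments. Once the ranks are pinned down, the counting and the contradiction are mechanical, and the exponential gap between $2^n - n - 1$ and $1$ is precisely what lets the construction scale to arbitrary $k$.
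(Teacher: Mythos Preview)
Your proposal is correct and follows essentially the same approach as the paper: build an AF on $\{a,b,c_1,\dots,c_n\}$ with $b$ self-attacking and $a$ in conflict with each $c_i$, then observe that $\rank{Z\cup\{a\}}=|Z|+1$ versus $\rank{Z\cup\{b\}}=2$, so that $b$ beats $a$ on the overwhelming majority of $Z$'s while $cf\text{-}C$ forces $a\succ b$. The only cosmetic difference is the orientation of the $a$--$c_i$ attacks, which is irrelevant for $r\text{-}cf$; your rank computation is in fact spelled out more carefully than the paper's.
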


Next, consider the axiom \emph{SC}.
Here, we can find a property of social ranking functions that guarantees that $\xi_\tau$ satisfies \emph{SC} under the assumption that $\tau$ satisfies the following principle:

\begin{definition}[Respects Conflicts]
    For AF $F=(A,R)$ and $E,E' \subseteq A$ extension-ranking semantics $\tau$ satisfies \emph{respects conflicts} if $E \in cf(F)$ and $E' \notin cf(F)$, then $E \sqsupset^{\tau}_F E'$.
\end{definition}
To show that $\xi_\tau$ satisfies \emph{SC} we also need the \emph{Dominating set} property from Definition~\ref{def:dominatingset}.
With these two properties we can then show when SC is satisfied.
\begin{theorem}
    For AF $F=(A,R)$ if extension-ranking semantics $\tau$ satisfies respects conflicts and social ranking function $\xi$ satisfies Dominating set, then $\xi_\tau$ satisfies SC.
\end{theorem}

\subsubsection{Necessary Conditions for Social Ranking Argument-ranking semantics}
Let us try to go the other way, that is finding necessary conditions for 
the social ranking functions to satisfy desirable properties. 
First observe it is not possible to formulate any necessary conditions that also hold for any ranking that cannot be realised by any AF, i.\,e., we cannot find an AF that induces this ranking. 
This is because any property of the argument-ranking only restricts the social ranking function on realisable rankings. Therefore, we need to define the following concept.

\begin{definition}
    Let $X$ be a set of arguments and let $\sqsupseteq$ be a preorder on $\mathcal{P}(X)$.
    Then, we say that $\sqsupseteq$ is $\tau$-\emph{realisable} for a extension-ranking 
    semantics $\tau$ if there is an AF $F=(A,R)$ with $A = X$ such that
    $\sqsupseteq^\tau_F = \sqsupseteq$.
\end{definition}

For example, for a set $\{a,b\}$ any preorder containing $\{a,b\} \sqsupset \{a\}$
is not $r\text{-}cf$-realisable. The conflicts in $\{a,b\}$ must be a strict super-set 
of the conflicts in $\{a\}$. On the other hand, the preorder
containing exactly the relations $\{a\} \sqsupseteq \{a,b\}$ and
$\{b\} \sqsupseteq \{a,b\}$ is realised by AF $F= (\{a,b\}, \{(a,b)\})$.

\begin{theorem}\label{th:cfc_dominating}
    Let $\xi$ be a social ranking function such that $\xi_{r\text{-}cf}$ satisfies $cf\text{-}C$. Then, $\xi$ satisfies Dominating set for all $r\text{-}cf$-realisable preorders $\sqsupseteq$.
\end{theorem}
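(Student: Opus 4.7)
The plan is to exploit the very restrictive structure of the $r\text{-}cf$ base ranking, which is determined entirely by the subset relation on conflict sets, to show that whenever a dominating set exists we are forced into a situation of the form ``$x$ credulously accepted, $y$ rejected'' under $cf$, so that $cf\text{-}C$ kicks in.

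Concretely, suppose $\sqsupseteq$ is $r\text{-}cf$-realisable via some AF $F=(A,R)$, and let $x,y \in A$ and $X \subseteq A$ with $x \in X$ form a dominating-set instance, i.e.\ $X \sqsupset Y$ for every $Y \subseteq A$ with $y \in Y$. The first step is to apply the dominance condition to the particular choice $Y = \{y\}$, obtaining $CF_F(X) \subsetneq CF_F(\{y\})$. Since $CF_F(\{y\})$ can only contain the single potential conflict $(y,y)$, this strict inclusion forces $CF_F(\{y\}) = \{(y,y)\}$ and $CF_F(X) = \emptyset$.

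From these two equalities I extract the two crucial facts: $(y,y) \in R$, so every set containing $y$ is non-conflict-free, i.e.\ $y \in rej_{cf}(F)$; and $X$ is itself conflict-free with $x \in X$, i.e.\ $x \in cred_{cf}(F)$. At this point the assumption that $\xi_{r\text{-}cf}$ satisfies $cf\text{-}C$ yields $x \succ^{\xi_{r\text{-}cf}}_F y$, which by the very definition of the social ranking argument-ranking semantics is the same as $x \succ^{\xi}_{\sqsupseteq^{r\text{-}cf}_F} y$, and hence $x \succ^{\xi}_{\sqsupseteq} y$ since $\sqsupseteq = \sqsupseteq^{r\text{-}cf}_F$.

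I do not expect any real obstacle here: the proof is essentially a one-liner once one realises that testing dominance against the singleton $\{y\}$ already collapses the situation to self-attack of $y$ and conflict-freeness of $X$. The only thing worth spelling out carefully is the case $x = y$ (which is vacuous, since the premise would require $X \sqsupset X$) and the observation that $r\text{-}cf$-realisability is exactly the hypothesis needed to translate the abstract dominating-set condition on $\sqsupseteq$ into a concrete statement about conflicts in some $F$, which is where $cf\text{-}C$ can finally be applied.
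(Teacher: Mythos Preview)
Your proof is correct and follows essentially the same approach as the paper: both arguments test the dominance condition against the singleton $\{y\}$ to force $(y,y)\in R$ and thereby place $y$ in $rej_{cf}(F)$ and $x$ in $cred_{cf}(F)$, at which point $cf\text{-}C$ finishes the job. The only cosmetic difference is that the paper first passes through $\{x\}$ (using $\{x\}\sqsupseteq X \sqsupset \{y\}$ to conclude $CF_F(\{x\})\subsetneq CF_F(\{y\})$), whereas you compare $X$ directly to $\{y\}$ and obtain $CF_F(X)=\emptyset$ outright; your route is marginally more direct but the underlying idea is identical.
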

\begin{proof}
      Let $\sqsupseteq$ be a $cf$-realisable preorder and let $F$ be an $AF$
    that realises it. Assume further that there are $x, y \in A$ such that there 
    exists a $X$ with $x\in X$ for which we have $X \sqsupset Y$ for all $Y$
    such that $y \in Y$. 

    As $X$ contains $x$, its set of conflicts must be a strict super-set
    of the conflicts in $\{x\}$. It follows that $\{x\} \sqsupseteq X \sqsupset Y$
    and hence by transitivity also $\{x\} \sqsupset Y$ for all $Y$ such that 
    $y \in Y$. In particular, it follows that $\{x\} \sqsupset \{y\}$.
    By definition, this means  $CF_F(\{x\}) \subset CF_F(\{y\})$,
    which can only hold if $y$ is self-attacking and $x$ is not.
    However, then $x$ is credulously accepted in the under conflict-free
    semantics while $y$ is not. Consequently, it follows from $cf\text{-}C$ that $x \succ y$.
    Hence, dominating set is satisfied.
\end{proof}

It follows that dominating set is a necessary and sufficient condition for a 
social ranking function to satisfy $cf\text{-}C$ when combined with ${r\text{-}cf}$.
A similar result can be found for admissible semantics. 
\begin{theorem}\label{th:adc_dominating}
    Let $\xi$ be a social ranking s.t. $\xi_{r\text{-}ad}$ satisfies $ad$-$C$. Then $\xi$ satisfies \emph{Dominating set} for all $r$-$ad$-realisable preorders $\sqsupseteq$. 
\end{theorem}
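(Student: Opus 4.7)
The plan is to mirror the proof of Theorem~\ref{th:cfc_dominating}, adapted to the admissible base relations. Let $\sqsupseteq$ be $r$-$ad$-realisable, realised by some AF $F=(A,R)$, and suppose $x,y\in A$ admit a dominating set $X\subseteq A$ with $x\in X$, i.\,e.\ $X\sqsupset Y$ for every $Y\subseteq A$ containing $y$. My aim is to invoke $ad$-$C$ of $\xi_{r\text{-}ad}$, which will force $x\succ^{\xi_{r\text{-}ad}}_F y$ (equivalently $x\succ^{\xi}_{\sqsupseteq} y$) as soon as I establish $x\in cred_{ad}(F)$ and $y\in rej_{ad}(F)$.

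The first step is to show $y\in rej_{ad}(F)$. Since $r$-$ad$ satisfies $ad$-generalisation~\cite{DBLP:conf/ijcai/SkibaRTHK21}, the maximal elements of $\sqsupseteq^{r\text{-}ad}_F$ are exactly the admissible sets of $F$. If $y$ belonged to some admissible $E$, then $E$ would be maximal in $\sqsupseteq^{r\text{-}ad}_F$ and hence could not be strictly dominated by $X$, contradicting $X\sqsupset E$. So no admissible set contains $y$, and $y\in rej_{ad}(F)$.

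The second step, and the main obstacle, is to show $x\in cred_{ad}(F)$. This is more delicate than the $cf$ case. In the proof of Theorem~\ref{th:cfc_dominating} the inclusion $CF_F(\{x\})\subseteq CF_F(X)$ transfers the strict domination from $X$ down to the singleton $\{x\}$, and conflict-freeness of $\{x\}$ is read off immediately from $\{x\}\sqsupset\{y\}$. In the admissible setting this shortcut breaks, because $\sqsupseteq^{r\text{-}ad}_F$ combines $CF$ with $UD$, and a conflict-free singleton $\{x\}$ may still have undefended attackers in $F$. My plan is therefore to walk $X$ upward in $\sqsupseteq^{r\text{-}ad}_F$ to a maximal element $X^{*}\sqsupseteq X$, which by $ad$-generalisation lies in $ad(F)$, and then argue that $x\in X^{*}$ (or that $x$ belongs to some admissible set derivable from $X^{*}$). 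The key combinatorial idea is that if $x$ were absent from every admissible refinement of $X$, then swapping $x$ for $y$ in such a refinement (together with a careful analysis of $CF$ and $UD$) would produce a set $Y\ni y$ whose $r\text{-}ad$-profile matches or improves on that of $X$, contradicting $X\sqsupset Y$ — here it is crucial that the dominating hypothesis applies to \emph{every} superset of $\{y\}$, not merely $\{y\}$ itself.

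Having established $x\in cred_{ad}(F)$ and $y\in rej_{ad}(F)$, the assumed $ad$-$C$ of $\xi_{r\text{-}ad}$ yields $x\succ^{\xi_{r\text{-}ad}}_F y$, which is exactly $x\succ^{\xi}_{\sqsupseteq} y$, so $\xi$ satisfies Dominating set on $\sqsupseteq$. The hard part is Step~2, because admissibility, unlike conflict-freeness, is a global property of $F$ rather than one visible already at the singleton level, so the dominating-set hypothesis has to be exploited through its full universal form against all $Y\ni y$.
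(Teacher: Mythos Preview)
Your overall plan—establish $y\in rej_{ad}(F)$ and $x\in cred_{ad}(F)$ from the dominating-set hypothesis and then invoke $ad$-$C$—is exactly the paper's strategy, and your Step~1 is correct. The divergence is entirely in Step~2. The paper does not walk $X$ upward to some admissible $X^{*}$ and then argue that $x\in X^{*}$; instead it argues directly that the dominating set $X$ itself must already be admissible. It first uses $X\sqsupset\{y\}$ to obtain $CF_F(X)\subseteq CF_F(\{y\})\subseteq\{(y,y)\}$, and $y\notin X$ then forces $CF_F(X)=\emptyset$; next, still testing against $Y'=\{y\}$, it compares at the $UD$-level to conclude $UD_F(X)=\emptyset$. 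Your ``swap $x$ for $y$ in an admissible refinement $X^{*}$'' idea is not made precise and does not go through as sketched: if $x\notin X^{*}$ there is nothing to swap, and simply inserting $y$ into $X^{*}$ typically introduces conflicts, so the resulting $Y\ni y$ is strictly \emph{worse} than $X$ under $r\text{-}ad$, not comparable to or better than it.

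More seriously, the implication you need in Step~2 actually fails when $y$ is self-attacking, so neither your swap idea nor the paper's direct argument can be completed as stated. Take $F=(\{x,y,z\},\{(z,x),(y,y)\})$: every $Y\ni y$ carries the conflict $(y,y)$ while $CF_F(\{x\})=\emptyset$, so $X=\{x\}$ strictly dominates all such $Y$ purely at the $CF$-level; yet $x$ is attacked by $z$, undefended, and lies in no admissible set, hence $x\in rej_{ad}(F)$ and $ad$-$C$ is silent about $x$ versus $y$. The paper's Case-2 inference ``$UD_F(\{y\})=\emptyset$ implies $\{y\}\sqsupset X$'' tacitly requires $CF_F(\{y\})=CF_F(X)=\emptyset$, i.e.\ that $y$ is not self-attacking, and this fails in the example. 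So the gap you flag in Step~2 is genuine and cannot be closed along either route without an additional hypothesis (for instance, excluding self-attacking $y$).
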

\begin{proof}
     Let $\sqsupseteq$ be a $r$-$ad$-realisable preorder and AF $F=(A,R)$ induces $\sqsupseteq$. Assume $x, y \in A$ such that there exists $X \subseteq A$ with $x \in X$ for which we have $X \sqsupset Y$ for all $Y$ such that $y \in Y$.

    Assume that the set $X$ is not admissible.
    That means one of the following two cases must apply:
    \begin{align*}
       (1)\quad CF_F(X)\neq\emptyset \quad \text{ or, } \quad (2)\quad UD_F(X)\neq\emptyset.
    \end{align*}
    To $(1)$:
    Then, there is some attack $(a,b)\in CF_F(X)$ for $a,b\in X$.
    From $X \sqsupset Y$ it follows that $CF_F(X) \subseteq CF_F(Y)$ and thus $(a,b)\in CF_F(Y)$.
    Now, if $y=a$ or $y=b$ it follows that $y\in X$ which directly contradicts our assumption because of $X \equiv Y'$ for $Y'=X$ with $y\in Y'$.
    However, if $y\neq a$ and $y\neq b$ we can construct $Y'=Y\setminus\{a,b\}$.
    Clearly, that means we either have $CF_F(Y')=\emptyset$ which means $Y \sqsupset X$ or we have $CF_F(Y')\neq\emptyset$ which implies $X \asymp Y'$.
    Because of $y\in Y'$ both cases contradict the initial assumption, hence we must have that $CF_F(X)=\emptyset$, i.\,e. the set $X$ is conflict-free.
    
    To $(2)$:
    Then, there exists an argument $a\in UD_F(X)$ which is not defended by $X$.
    Consider now the set $Y'=\{y\}$ for which we either have that $UD_F(Y')=\emptyset$ or $UD_F(Y')=\{y\}$.
    If $UD_F(Y')=\emptyset$, it follows directly that $Y' \sqsupset X$, contradicting our initial assumption.
    On the other hand, for $UD_F(Y')=\{y\}$ we distinguish between two cases:
    \begin{align*}
        (2.1) \quad y=x, \quad \quad (2.2) \quad y\neq x
    \end{align*}
    Clearly, if $x=y$ we contradict our initial assumption because $X\equiv Y''$ for $Y''=X$.
    Consider now the case $y\neq x$.
    That means, we have that $UD_F(X) \asymp UD_F(Y')$ and thus $X \asymp Y'$.
    Therefore, it follows that we must have $UD_F(X)=\emptyset$, i.\,e. $X$ defends all its elements.
    That means $X$ is admissible and thus it follows directly that $x\in cred_{ad}(F)$.

    From $UD_F(X)=\emptyset$ and $X \sqsupset^{UD}_F Y$ for all $Y$ it follows that $UD_F(Y)\neq\emptyset$.
    Since $\sqsupseteq$ satisfies \emph{ad-generalisation} it follows that $Y\notin ad(F)$ for all $Y$ and thus also $y\in rej_{ad}(F)$.
    Consequently, it follows from $ad\text{-}C$ that $x\succ y$. Hence, Dominating set is satisfied.    
\end{proof}

The previous result suggest that we should check if lex-cel satisfies \emph{Pareto-efficiency}.
\begin{theorem}
    lex-cel satisfies \emph{Pareto-efficiency}.
\end{theorem}

A detailed investigation of the lex-cel argument-ranking semantics $lex\text{-}cel_\tau$ with respect to the satisfied principles will be done in future work.

\section{Related Work}\label{sec:related work}
A number of social ranking functions are discussed in the literature. Like the \emph{ordinal Banzhaf relation} (BI) by \citet{DBLP:conf/ijcai/KhaniMO19} or \emph{ceteris paribus majority relation} (CP) by \citet{DBLP:conf/ijcai/HaretKMO18}. However, the corresponding Social ranking argument-ranking semantics $BI_{\tau}$ and $CP_{\tau}$ do not generalise credulous acceptance, because these two argument-ranking semantics with respect to $r\text{-}ad$ do not satisfy the principle SC (the corresponding counter-examples and definitions can be found in the supplementary material). So, self-contradicting arguments are not necessarily the worst ranked arguments. These two social ranking functions are not suitable to rank arguments in the context of abstract argumentation and therefore we do not discuss them further. 

A number of other argument-ranking semantics were introduced in the literature (for an overview see \citet{DBLP:conf/aaai/BonzonDKM16}). However, the only known argument-ranking semantics satisfying $ad$-Compatibility is the \emph{serialisability-based argument-ranking semantics} (ser) by \citet{DBLP:conf/comma/BlumelT22}. The \emph{serialisability-based argument ranking semantics} ranks arguments according to the number of conflicts that need to be resolved to include these arguments in an admissible set. However, this semantics violates $co$-sk-C. 
\begin{example}\label{example:seri}
Let $F_2$ be the AF as depicted in Figure \ref{fig:example:seri}. Then argument $d \in sk_{co}(F_2)$. So, according to $co$-sk-C it should hold that $d \succ_F a$, however this is not the case for $ser$, i.\,e. $a \succ^{ser}_{F_2} d$. Thus $co$-sk-C is violated. 
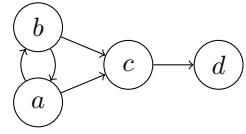
\begin{figure}[t] 
 \begin{center}
     \begin{tikzpicture}
        \node (a) [circle, draw, minimum size= 0.65cm] at (0, 0) {$a$};
        \node (b) [circle, draw, minimum size= 0.65cm] at (0, 1) {$b$};
        \node (c) [circle, draw, minimum size= 0.65cm] at (1.2, 0.5) {$c$};
        \node (d) [circle, draw, minimum size= 0.65cm] at (2.4, 0.5) {$d$};

        \draw [->,bend left] (a) to (b);
        \draw [->, bend left] (b) to (a);

        \draw [->] (b) to (c);
         \draw [->] (a) to (c);
                \draw [->] (c) to (d);
    \end{tikzpicture}
     \end{center} 
 \caption{AF $F_2$ from Example \ref{example:seri}.}
  \label{fig:example:seri} 
\end{figure}
\end{example}
lex-cel$_\tau$ is the only known argument-ranking semantics that satisfies $\sigma$-C and $\sigma$-sk-C and thus satisfies $\sigma$-Refinement for extension-based semantics $\sigma$. 
Thus, lex-cel$_\tau$ is part of none of the equivalence classes of argument-ranking semantics defined by \citet{DBLP:conf/ecsqaru/AmgoudB23}.

\section{Conclusion}\label{sec:conclusion}
In this paper we have combined well-known approaches from abstract argumentation and social ranking functions to define a new family of argument-ranking semantics. 
The resulting semantics are generalisations of the acceptance classifications for abstract argumentation. Thus, the skeptically accepted arguments are ranked before credulously accepted arguments and those are ranked before rejected arguments, and within each of these groupings the arguments are also ranked. All the methods used are off the shelf approaches and already discussed in the literature, showing the connection between social ranking function and argumentation as well as the simplicity of this approach.

The converse problem to social ranking functions are \emph{lifting operators}, i.\,e. given a ranking over objects, we want to construct a ranking over sets of objects. These operators have been discussed for argumentation in the past by \citet{DBLP:conf/comma/YunVCB18} and \citet{DBLP:conf/aaai/maly21}. However, both theses papers do not present a complete picture of lifting operators for abstract argumentation, since they either consider only a subset of sets of arguments (\citet{DBLP:conf/comma/YunVCB18}) or only discuss lifting operators for $ASPIC^+$ (\citet{DBLP:conf/aaai/maly21}). \citet{DBLP:conf/ki/Skiba23} discussed some shortcomings of lifting operators for argumentation frameworks and discussed the need to define lifting operators specifically tailored to abstract argumentation to fully discuss the relationship of argument-ranking semantics, extension-ranking semantics and lifting operators.  

\section*{Acknowledgements}
The research reported here was supported by the Deutsche Forschungsgemeinschaft under grants 375588274 and 506604007, by the
European Research Council (ERC) under the European Union’s Horizon 2020 research
and innovation programme (grant agreement No. 101034440)
and by the Austrian Science Fund (FWF) under grant J4581.

\bibliography{references}
\clearpage

\section*{Appendix}

\setcounter{section}{2}
\section{Social Ranking}

\begin{theorem}
    Any social ranking function that satisfies Independence from the worst set
    and Pareto-efficiency also satisfies Dominating set.
\end{theorem}

\begin{proof}
    Let $\sqsupseteq$ be a preorder on $\mathcal{P}$ and let $x,y$ be elements such that $\exists X^d \subseteq \mathcal{P}$ with $x \in X^d$ such that $\forall X'$ with $y \in X'$ then $X^d \sqsupset X'$.     
    Furthermore, let $w := \Rank_\sqsupseteq(X^d) +1$. We consider the preorder $\sqsupseteq^*$
    that is defined as follows: For any two sets $X,Y \in \mathcal{P}$ we have
    $X \sqsupseteq^* Y$ if and only if $X \sqsupseteq Y$ and either 
    $\Rank_\sqsupseteq(X) < w$ or $\Rank_{\sqsupseteq}(Y) < w$. 
    We claim that  
    \[\max_{X \in \mathcal{P}}(\Rank_{\sqsupseteq^*}(X)) = w.\]
    
    First, to see that $\max_{X \in \mathcal{P}}(\Rank_{\sqsupseteq^*}(X)) \leq w$
    we assume for the sake of a contradiction that there is a set $X$ 
    with $\Rank_{\sqsupseteq*}(X) = w^*> w$. Then, by definition, there is a sequence 
    $X_1 \sqsupseteq^* X_2 \sqsupseteq^* \dots \sqsupseteq^* X_{w^*} \sqsupseteq^* X$.
    As every preference in $\sqsupseteq^*$ is also valid in $\sqsupseteq$,
    the same sequence exists for $\sqsupseteq$, i.\,e. 
    $X_1 \sqsupseteq X_2 \sqsupseteq \dots \sqsupseteq X_{w^*} \sqsupseteq X$.
    However, this means $\Rank_{\sqsupseteq}(X_{w^*}) \geq w^* - 1\geq w$ and
    $\Rank_{\sqsupseteq}(X) \geq w^* > w$, which contradicts $X_{w^*} \sqsupseteq^* X$.
    
    To see that $\max_{X \in \mathcal{P}}(\Rank_{\sqsupseteq^*}(X)) \geq w$
    we first observe that as $\Rank_\sqsupseteq(X^d) = w -1$ there is a sequence
    $X_1 \sqsupseteq X_2 \sqsupseteq \dots \sqsupseteq X_{w -1} \sqsupseteq X$.
    As this sequence is maximal, $\Rank_\sqsupseteq(X_i) < w$ for all elements $X_i$
    of the sequence. Hence the same sequence exists in $\sqsupseteq^*$. 
    Finally, as $X^d$ is a dominating set, we know $X^d \sqsupseteq \{y\}$
    and as $\Rank_\sqsupseteq(X^d) < w$, we also have $X^d \sqsupseteq^* \{y\}$.
    Therefore, $X_1 \sqsupseteq^* X_2 \sqsupseteq^* \dots
    \sqsupseteq^* X_{w -1} \sqsupseteq^* X \sqsupseteq^* \{y\}$ witnesses that  
    $\Rank_{\sqsupseteq^*}(\{y\}) \geq w$.

    Next, we claim that $x \succ^{\sqsupseteq^*} y$ for all social ranking functions that 
    satisfy Pareto-efficiency: By definition, $\Rank_{\sqsupseteq^*}(X^d) = w-1$. Furthermore, we have
    $X^d = (X^d \setminus \{x\}) \cup \{x\} \sqsupseteq^* (X^d \setminus \{x\})
    \cup \{y\}$, and thus  $\Rank_{\sqsupseteq^*}((X^d \setminus \{x\})
    \cup \{y\}) > w-1$. This shows that $\Rank_{\sqsupseteq^*}(X^d) < \Rank_{\sqsupseteq^*}((X^d \setminus \{x\})
    \cup \{y\})$. On the other hand, there can be no $Z$ such that $\Rank_{\sqsupseteq^*}(Z \cup \{y\}) <
    \Rank_{\sqsupseteq^*}(Z \cup \{x\}$): As $Z \cup \{y\}$ is dominated by $X^d$, 
    we know $\Rank_\sqsupseteq(Z \cup \{y\}) \geq w$ and thus $\Rank_{\sqsupseteq}^*(Z \cup \{y\}) \geq w$.
    Thus, the claim follows directly from $\max_{X \in \mathcal{P}}(\Rank_{\sqsupseteq^*}(X)) = w$

    Finally, if $\succeq$ also satisfies Independence from the worst set, 
    if follows that also $x \succ^\sqsupseteq y$, as $\sqsupseteq$ is just a
    refinement of the worst set of $\sqsupseteq^*$.
\end{proof}

%

\section{Defining Argument-ranking Semantics via Social Rankings}\label{sec:arg ranking social ranking}
\begin{proposition}
    Let $F=(A,R)$ be an AF, $a,b \in A$ and $\tau$ an extension ranking. If $a \succeq^{\text{lex-cel}_\tau}_F b$, then $a \succeq^\tau_F b$. 
\end{proposition}
\begin{proof}
 Let $F=(A,R)$ be an AF, $a,b \in A$ and $\tau$ an extension ranking.
Assume $a \succeq^{\text{lex-cel}_\tau}_F b$, then there is an $k$ s.t for all $i < k$ we have $a_{i,\tau} = b_{i,\tau}$ and $a_{k,\tau} \geq b_{k,\tau}$.

If $b_{j,\tau} \neq 0$ for $1\leq j \leq k$, then there is one $Y \subseteq A$ with $rank_\tau(Y) = j$ and $b \in Y$. W.l.o.g. let $j$ be the smallest number s.t. $b_{j,\tau} \neq 0$. Then $Y \sqsupseteq^\tau_F X$ for all $X \subseteq A$ with $a \in X$, therefore $b \succeq^\tau_F a$. Since, $b_{j,\tau} \leq a_{j,\tau}$, there has to be an $X' \subseteq A$ with $rank_\tau(X')=j$ and $a \in X'$ s.t. $X' \equiv^\tau_F Y$, so $a \succeq^\tau_F b$.

If $b_{j,\tau} = 0$ for all $j \in \{1, \dots, k\}$ and $a_{k,\tau} > 0$, then there is at least one $X \subseteq A$ with $a \in X$ and $rank_\tau(X) = k$ s.t. $X \sqsupset^\tau_F Y$ for all $Y \subseteq A$ with $b \in Y$, and therefore $a \succ^\tau_F b$. 
\end{proof}

\begin{proposition}
    Any social ranking function that, together with $r\text{-}cf$,
    satisfies $cf$-\emph{C} and violates rank $k$-super majority for every $k$.
\end{proposition}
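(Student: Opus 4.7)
The plan is to proceed by a counterexample construction parameterized by $k$. Fix an arbitrary $k \in \mathbb{N}$ and an arbitrary social ranking function $\xi$ such that $\xi_{r\text{-}cf}$ satisfies $cf$-\emph{C}. I will build a single AF on which the ranking produced by $\xi$ against the $r\text{-}cf$ preorder must violate rank $k$-super majority, thereby establishing the claim for this $k$.

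First, I would construct the AF $F=(\{a,b,c_1,\dots,c_\ell\}, R)$ with attacks $R=\{(b,b)\}\cup\{(c_i,a) \mid 1\leq i \leq \ell\}$, where $\ell$ is chosen large enough that $2^\ell - \ell - 1 > k$ (in particular $\ell \geq 3$). Two immediate observations follow: the singleton $\{a\}$ is conflict-free so $a \in cred_{cf}(F)$, and $b$ is self-attacking so $b \in rej_{cf}(F)$. Applying $cf$-\emph{C} to $\xi_{r\text{-}cf}$ then forces $a \succ b$.

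Next, I would analyze ranks under $r\text{-}cf$ for all sets of the form $Z \cup \{x\}$ with $x \in \{a,b\}$ and $Z \subseteq \{c_1,\dots,c_\ell\}$. Because conflicts in $F$ are restricted, one gets $CF_F(Z \cup \{a\}) = \{(c_i,a) \mid c_i \in Z\}$ and $CF_F(Z \cup \{b\}) = \{(b,b)\}$. The descending chain obtained by progressively removing elements of $Z$ then yields $\Rank_{r\text{-}cf}(Z \cup \{a\}) = |Z|+1$, while the absence of any set in $F$ with conflict-set strictly between $\emptyset$ and $\{(b,b)\}$ gives $\Rank_{r\text{-}cf}(Z \cup \{b\}) = 2$. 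Counting the two sides: $\Rank(Z\cup\{a\}) < \Rank(Z\cup\{b\})$ occurs only for $Z = \emptyset$, whereas $\Rank(Z\cup\{b\}) < \Rank(Z\cup\{a\})$ occurs for every $Z$ with $|Z| \geq 2$, yielding $2^\ell - \ell - 1$ such sets. By the choice of $\ell$, this exceeds $k \cdot 1$, so rank $k$-super majority would force $b \succeq a$, contradicting $a \succ b$.

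The main obstacle I expect is pinning down the rank computation cleanly. The lower bound $\Rank(Z \cup \{a\}) \geq |Z|+1$ via the obvious descending chain is routine, but the matching upper bound, together with $\Rank(Z \cup \{b\}) = 2$, depends on the structural fact that in this particular AF no conflict-set strictly interpolates within the relevant chains. Once that chain structure of the $r\text{-}cf$ preorder is established, the remainder of the argument is straightforward combinatorial accounting.
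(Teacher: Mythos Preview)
Your proposal is correct and follows essentially the same approach as the paper: the same AF on $\{a,b,c_1,\dots,c_\ell\}$ with attacks $(b,b)$ and $(c_i,a)$, the same use of $cf$-\emph{C} to force $a \succ b$, and the same counting of sets $Z$ on each side of the rank inequality. If anything, you are more careful than the paper in spelling out the rank computations $\Rank_{r\text{-}cf}(Z\cup\{a\})=|Z|+1$ and $\Rank_{r\text{-}cf}(Z\cup\{b\})=2$ and in stating the exact threshold $2^\ell-\ell-1>k$ for the choice of $\ell$.
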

\begin{proof}
Let $k$ be an arbitrary natural number, $\ell$ a natural number such that $\ell \geq k$
and $\ell \geq 3$. Furthermore consider an argumentation framework $F$
with the arguments $a, b, c_1, \dots c_\ell$ and the attacks $(b,b)$ and $(c_i,a)$
for all $i \leq \ell$. Then, $a \in cred_{cf}(F)$, as witnessed by the conflict free set
$\{a\}$, but $b \in rej_{cf}(F)$, as it is self-attacking.
It follows from the fact that $a \succ b$, because $\preceq$ satisfies $cf$-\emph{C}.
However, observe that
\begin{multline*}
\{Z \in \mathcal{P} \mid x,y \not \in Z \land \Rank_{r\text{-cf}}(Z \cup \{a\})\} \\<
\Rank_{r\text{-cf}}(Z \cup \{b\}) = \{\emptyset\}
\end{multline*}
while 
\begin{multline*}
\{Z \in \mathcal{P} \mid x,y \not \in Z \\\land \Rank_{r\text{-cf}}(Z \cup \{a\}) <
\Rank_{r\text{-cf}}(Z \cup \{b\})\} \\= \{Z \in \mathcal{P} \mid x,y \not \in Z \land |Z| \geq 2\}.
\end{multline*}
However, then, by our choice of $\ell$ we know 
\[|\{Z \in \mathcal{P} \mid x,y \not \in Z \land |Z| \geq 2\}| > k =
k \cdot |\{\emptyset\}|.\]
It follows that rank $k$-super majority is violated.
\end{proof}

\setcounter{theorem}{3}
\begin{theorem}
    For AF $F=(A,R)$ if extension-ranking semantics $\tau$ satisfies respects conflicts and social ranking function $\xi$ satisfies Dominating set, then $\xi_\tau$ satisfies SC.
\end{theorem}
\begin{proof}
    For AF $F=(A,R)$, let $a,b \in A$, $(b,b) \in R$  and $(a,a) \notin R$, then $\{a\} \in cf(F)$ and for all $E'$ with $b \in E'$ it holds that $E' \notin cf(F)$. Because of respects conflicts we have $\{a\} \sqsupset E'$ and therefore because of Dominating set we have $a \succ^{\xi_\tau}_F b$. 
\end{proof}
\setcounter{theorem}{6}
\begin{theorem}
    lex-cel satisfies \emph{Pareto-efficiency}.
\end{theorem}
\begin{proof}
    First, consider sets $Z_{1},\dots ,Z_{n}\in \mathcal{P}$ for which condition $(2)$ of Pareto-efficiency holds.
    Among these, take those $Z_{1},\dots ,Z_{m}$ (with $m\leq n$) for which $\Rank_{\sqsupseteq}(Z_{j} \cup \{x\})=k$ (with $1\leq j\leq m$) is minimal. 
    At this level in the ranking, we have that $\Rank_{\sqsupseteq}(Z \cup \{x\})=\Rank_{\sqsupseteq}(Z \cup \{y\})$ for each $Z\neq Z_j$. Hence, for every $Z\cup \{x\}$ there is exactly one corresponding set $Z\cup \{y\}$, except for each $Z_j\cup \{x\}$ (because $\Rank_{\sqsupseteq}(Z_j \cup \{y\})>k$). Thus, for each $Z\in \mathcal{P}$ with $x,y\notin Z$:
    \begin{multline*}
        |\{Z \cup \{x\} \in \mathcal{P}\mid  \Rank_{\sqsupseteq}(Z \cup \{x\})=k\}| > \\  |\{Z \cup \{y\} \in \mathcal{P}\mid \Rank_{\sqsupseteq}(Z \cup \{y\})=k\}|.
    \end{multline*}
    At level $k$, there are more sets containing $x$ than those containing $y$, i.\,e. $x_{k, \sqsupseteq}> y_{k, \sqsupseteq}$ by Definition~10. 
    To prove $x\succ^{\text{lex-cel}}_{\sqsupseteq}y$ it remains to show that $x_{i, \sqsupseteq}= y_{i, \sqsupseteq}$ for all $i<k$. By construction, for all $i<k$ and $Z\in \mathcal{P}\setminus\{x,y\}$, we know that $\Rank_{\sqsupseteq}(Z \cup \{x\})=\Rank_{\sqsupseteq}(Z \cup \{y\})$. Hence, for each set containing $x$ there is exactly one set containing $y$. By Definition~10, we obtain $x_{i, \sqsupseteq}= y_{i, \sqsupseteq}$, as desired.
\end{proof}

\section{Related Work}
In the following, let $A$ be an arbitrary set of objects and $\sqsupseteq$ is a preorder on the powerset $\mathcal{P}(A)$.

A prominent social ranking function is the \emph{Ceteris Paribus Majority Solution} (CP), which is defined in~\cite{DBLP:conf/ijcai/HaretKMO18} as follows.

\setcounter{definition}{22}
\begin{definition}
    For the preorder $\sqsupseteq$ and for any $x,y\in A$, we have that $x \succeq^{CP_\sqsupseteq} y$ if and only if
    \begin{multline*}
    |\{S \in \mathcal{P}(A \setminus \{x,y\}) | S \cup \{x\} \sqsupset S \cup \{y\}\}| \geq\\
    |\{S \in \mathcal{P}(A \setminus \{x,y\}) | S \cup \{y\} \sqsupset S \cup \{x\}\}|
    \end{multline*}
\end{definition}

Another relevant social ranking function is the \emph{Ordinal Banzhaf Index Solution} (BI)~\cite{DBLP:conf/ijcai/KhaniMO19}.
For that, we denote with $U_i = \{S\in \mathcal{P} \mid i\notin S\}$ the set of subsets that do not contain $i$ and with $U_{ij} = \{S \in \mathcal{P} \mid i,j \notin S\}$ the set of subsets that contain neither $i$ nor $j$.

First, we define the notion of \emph{ordinal marginal contribution} as follows.

\begin{definition}
    Let $\sqsupseteq$ be a preorder on $\mathcal{P}(A)$.
    The \emph{ordinal marginal contribution} $m_i^S(\sqsupseteq)$ of element $i$ wrt. the set $S$ with $i\notin S$, for the preorder $\sqsupseteq$ is defined as:
    \begin{align}
        m_i^S(\sqsupseteq) =
            \left\{\begin{array}{rl}
                 1 & \text{if}~S\cup\{i\} \sqsupset S,\\
                 -1 & \text{if}~S \sqsupset S\cup\{i\},\\
                 0 & \text{otherwise}.
            \end{array}\right.
    \end{align}
\end{definition}

We denote with $u_i^{+,\sqsupseteq}$ ($u_i^{-,\sqsupseteq}$) the set of subsets $S \in U_i$ such that $m_i^S(\sqsupseteq) = 1$ ($m_i^S(\sqsupseteq) = -1$) respectively.
Furthermore, we refer to the difference $s_i^\sqsupseteq = u_i^{+,\sqsupseteq} - u_i^{-,\sqsupseteq}$ as the \emph{ordinal Banzhaf score} of $i$ wrt. $\sqsupseteq$.

Finally, we define the social ranking solution based on the ordinal Banzhaf score as follows.

\begin{definition}
    For the preorder $\sqsupseteq$ and for any $x,y\in A$, we define that $x \succeq^{BI_{\sqsupseteq}} y$ if and only if
    \begin{align*}
        s_i^\sqsupseteq \geq s_j^\sqsupseteq
    \end{align*}
\end{definition}

\setcounter{figure}{2}
\begin{figure}[t] 
 \begin{center}
     \begin{tikzpicture}
        \node (a) [circle, draw] at (0, 0) {$a$};
        \node (b) [circle, draw] at (2, 0) {$b$};
        \node (c) [circle, draw] at (4, 0) {$c$};

        \draw [->,bend left] (a) to (b);
        \draw [->,bend left] (b) to (a);
        \draw [->, loop] (c) to (c);
        
    \end{tikzpicture}
     \end{center} 
 \caption{AF $F_3$ from Example \ref{ex:cp_sc}.}
  \label{fig:ex_cp_sc} 
\end{figure}
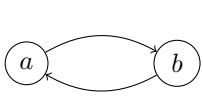

The social ranking argument-ranking semantics based on the two social ranking functions introduced above violate SC, as shown by the following examples.

\setcounter{example}{7}

\begin{example}\label{ex:cp_sc}
    The argument ranking $\succeq^{\text{CP}_\tau}$ violates SC for $\tau \in \{r\text{-}ad, r\text{-}co, r\text{-}gr, r\text{-}pr, r\text{-}sst\}$.
    Consider the AF $F_3$ in Figure~\ref{fig:ex_cp_sc}. 
    Then we have that $c \succeq^{\text{CP}_\tau} a$, which contradicts SC. 
\end{example}

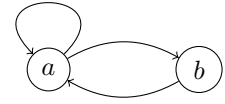
\begin{figure}[t] 
 \begin{center}
     \begin{tikzpicture}
        \node (a) [circle, draw] at (0, 0) {$a$};
        \node (b) [circle, draw] at (2, 0) {$b$};

        \draw [->,bend left] (a) to (b);
        \draw [->,bend left] (b) to (a);
        \draw [->, loop] (a) to (a);
        
    \end{tikzpicture}
     \end{center} 
 \caption{AF $F_4$ from Example \ref{ex:bi_sc}.}
  \label{fig:ex_bi_sc} 
\end{figure}

\begin{example}\label{ex:bi_sc}
    The argument ranking $\succeq^{\text{BI}_\tau}$ violates SC for $\tau \in \{r\text{-}ad, r\text{-}co, r\text{-}gr, r\text{-}pr, r\text{-}sst\}$.
    Consider the AF $F_4$ in Figure~\ref{fig:ex_bi_sc}.
    Then we have that $a \succeq^{\text{BI}_\tau}_F b$, which contradicts SC.
\end{example}

\end{document}